\newtheorem{theorem}{Theorem}
\def\eqref#1{equation~\ref{#1}}
\def\1{\bm{1}}
\newtheorem{proposition}[theorem]{Proposition}
\DeclareMathAlphabet{\mathsfit}{\encodingdefault}{\sfdefault}{m}{sl}
\SetMathAlphabet{\mathsfit}{bold}{\encodingdefault}{\sfdefault}{bx}{n}
\newcommand{\tens}[1]{\bm{\mathsfit{#1}}}
\def\tA{{\tens{A}}}
\def\tK{{\tens{K}}}
\def\tX{{\tens{X}}}
\newcommand{\bfB}{{\bf B}}
\newcommand{\bfC}{{\bf C}}
\newcommand{\bfD}{{\bf D}}
\newcommand{\bfI}{{\bf I}}
\newcommand{\bfM}{{\bf M}}
\newcommand{\bfb}{{\bf b}}
\newcommand{\bfx}{{\bf x}}
\newcommand{\bfq}{{\bf q}}
\newcommand{\bfd}{{\bf d}}
\newcommand{\bfr}{{\bf r}}
\newcommand{\bfz}{{\bf z}}
\newcommand{\grad}{{\boldsymbol \nabla}}
\newcommand{\bfepsilon}{{\boldsymbol \epsilon}}
\newcommand{\bftheta}{{\boldsymbol \theta}}
\renewcommand{\grad}{{\boldsymbol \nabla\, }}
\def\mydefb#1{\expandafter\def\csname bf#1\endcsname{\mathbf{#1}}}
\def\mydefallb#1{\ifx#1\mydefallb\else\mydefb#1\expandafter\mydefallb\fi}
\def\mydefgreek#1{\expandafter\def\csname bf#1\endcsname{\text{\boldmath$\mathbf{\csname #1\endcsname}$}}}
\def\mydefallgreek#1{\ifx\mydefallgreek#1\else\mydefgreek{#1}%
   \lowercase{\mydefgreek{#1}}\expandafter\mydefallgreek\fi}
\def\mydefb#1{\expandafter\def\csname bb#1\endcsname{\mathbb{#1}}}
\def\mydefallb#1{\ifx#1\mydefallb\else\mydefb#1\expandafter\mydefallb\fi}
\def\mydefb#1{\expandafter\def\csname cal#1\endcsname{\mathcal{#1}}}
\def\mydefallb#1{\ifx#1\mydefallb\else\mydefb#1\expandafter\mydefallb\fi}
    \pgfplotsset{compat=1.16}
\definecolor{matlab1}{RGB}{0,    114,  189} 
\definecolor{matlab2}{RGB}{217,   83,   25}
\definecolor{matlab3}{RGB}{237,  177,   32}
\definecolor{matlab4}{RGB}{126,   47,  142}
\definecolor{matlab5}{RGB}{119,  172,   48}
\definecolor{matlab6}{RGB}{77,   190,  238}
\definecolor{matlab7}{RGB}{162,   20,   47}
\newcommand{\mdot} {\,\cdot\,}
\newcommand{\thf}  {\tfrac{1}{2}}                            
\DeclareMathOperator*{\argmin}{arg\,min}                   
\renewcommand{\t} {^{\top}}                                
\renewcommand{\vec}[1] {{\rm vec}(#1)}                     
\newcommand{\norm} [2][]{\left\|#2\right\|_{#1}}           
\title{Paired Autoencoders for Likelihood-free Estimation in Inverse Problems}
\author{%
  Matthias Chung\thanks{.} \\
  Department of Mathematics\\
  Emory University\\
  Atlanta, GA 30322, USA \\
  \texttt{matthias.chung@emory.edu} \\
  \And
  Emma Hart\\
  Department of Mathematics\\
  Emory University\\
  Atlanta, GA 30322, USA \\
  \texttt{ehart5@emory.edu} \\
  \And
  Julianne Chung\\
  Department of Mathematics\\
  Emory University\\
  Atlanta, GA 30322, USA \\
  \texttt{jmchung@emory.edu} \\
  \And
  Bas Peters\\
  Computational Geosciences Inc.\\
  Vancouver, BC, Canada \\
  \texttt{bas@compgeoinc.com} \\
  \And
  Eldad Haber \\
  The University of British Columbia \\
  Vancouver, BC, Canada \\
  \texttt{ehaber@eoas.ubc.ca} \\
}
\begin{document}

\maketitle

\begin{abstract}
We consider the solution of nonlinear inverse problems where the forward problem is a discretization of a partial differential equation. Such problems are notoriously difficult to solve in practice and require minimizing a combination of a data-fit term and a regularization term. The main computational bottleneck of typical algorithms is the direct estimation of the data misfit.  Therefore, likelihood-free approaches have become appealing alternatives.  Nonetheless, difficulties in generalization and limitations in accuracy have hindered their broader utility and applicability. In this work, we use a paired autoencoder framework as a likelihood-free estimator for inverse problems. We show that the use of such an architecture allows us to construct a solution efficiently and to overcome some known open problems when using likelihood-free estimators.  In particular, our framework can assess the quality of the solution and improve on it if needed. We demonstrate the viability of our approach using examples from full waveform inversion and inverse electromagnetic imaging.
\end{abstract}

\section{Introduction}
\label{sec:intro}
We consider the nonlinear inverse problem of distributed parameter estimation from measured noisy data where the forward problem is given by partial differential equations (PDEs). Such problems include, for example, the electrical impedance tomography method in medical imaging, seismic full waveform inversion (FWI) \cite{claerbout,pratt1999}, electromagnetic inversion, and direct current resistivity, which are commonly used in geophysics \cite{haberBook2014,yaoguo1994inversion}. Other applications include the retrieval of images in transport and underwater acoustic imaging \cite{sutton1979underwater}.

\paragraph{Overview on solution techniques:} Techniques to estimate such parameters require the discretization of the differential equation (the forward problem) and solving the forward problem in space and potentially in frequency or time. For 3D problems, the solution process can be time-consuming, and it requires careful numerical considerations for stability and accuracy. The forward problem is then used within an inversion procedure that typically includes some regularized data fitting process. Such a process requires the solution of the adjoint problem, which is yet another PDE. In a realistic inversion procedure, the PDE is solved hundreds or even thousands of times. Furthermore, in many inverse problems containing multiple sources and frequencies, such as the multiple source seismic problem \cite{krebs09ffw}, every forward problem may include hundreds if not thousands of sources, making every forward problem computationally expensive. As a result, PDE-based inverse problems require extensive computations which are often performed on large computer clusters.

Another difficulty that nonlinear inversion processes face is the ``correct'' choice of regularization operators, which includes a priori information. This information can reside on a highly non-convex manifold. Even if its manifold is known, incorporating those operators in an inversion process can be challenging, where the problem even without regularization is non-convex and each iteration is computationally expensive.  This can lead to further computational costs due to slow convergence. Therefore, most nonlinear inversion routines use simple convex or almost convex regularizers such as smoothness priors and total variation \cite{RudinOsherFatemi92, jc1, chung2023variable}.

\paragraph{Machine learning in inverse problems:}
Machine learning has been taking an increasing role in the solution of inverse problems (see e.g., \cite{jin2017deep, mardani2018neural, lucas2018using, adler2017solving, aggarwal2018modl, ongie2020deep, bai2020deep, eliasof2023drip, gonzalez2022solving} and references within). Roughly speaking, there are two different approaches to the use of AI techniques in inverse problems. First, in an end-to-end approach (e.g., \cite{ongie2020deep}, \cite{wu2020inversionnet}), a regularizer is trained with the forward and adjoint operators to obtain a regularization function that yields optimal solutions (in some sense). Such techniques are highly effective and draw from the wealth of optimization algorithms as well as from the use of neural networks for computer vision. While these methods work well, their one main drawback is that they use the forward and adjoint operators in the training as well as in inference. Practically, it is impossible to train on thousands of examples when each step in a stochastic gradient descent method requires hundreds of PDE solves. This is the reason why these algorithms have been limited to problems where the computation of the forward problem is computationally inexpensive, e.g., mostly linear problems. A second set of algorithms are so-called plug-and-play algorithms. In these algorithms, one learns the manifold of the model without any data. This can be done by learning the prior distribution via diffusion models \cite{cao2024survey}, or learning a parameterized representation of the model using Generative Adversarial Models (e.g., \cite{shah2018solving}) or Variational Autoencoders (e.g., \cite{goh2019solving}). By avoiding the forward and adjoint when training the AI model, they can be used generally with any inverse problem. Nonetheless, since the prior information can be highly non-convex, the resulting optimization problem is difficult to solve and is prone to converge to local minima.

\paragraph{Likelihood-free methods:}
One way to obfuscate the forward and adjoint computation is likelihood-free estimators (LFE) \cite{papamakarios2019neural, hermans2020likelihood, sainsbury2023likelihood, sainsbury2022fast} and in particular neural point estimators \cite{sainsbury2022fast}, that is, a neural network that directly returns an estimate from data. Such techniques use the pairs of model data together to either estimate the posterior directly or, in the case of a point estimate, to estimate the solution to the inverse problem directly. These methods circumvent the solution of the forward problem and thus can be thought of as attractive replacements for cases where the forward problem is computationally demanding. These techniques are natural in the context of machine learning as they involve learning an unknown map from the data space to the model space, but they suffer several shortcomings.
First, such techniques can be highly sensitive to the particular experimental setting and require non-trivial architectures. This is particularly true if the data is very different from the model, that is, the data and model ``live'' in very different functional spaces. Second, the likelihood-free aspect of the estimator which makes it so attractive is also its Achilles heel. Upon inference, it is impossible to know if the method yields an acceptable model with an acceptable data misfit. Since training the network requires that the method fits the data only on average, it is possible that the estimator obtained at inference does not fit the data. For this reason, fewer papers use this strategy in inverse problems, and they are typically tied to a particular application such as seismic imaging \cite{deng2022openfwi, zhu2023fourier} or Electrical Impedance Tomography (EIT) \cite{hamilton2018deep, fan2020solving}. These papers use similar architectures; they transform data into an encoded space and then decode this space into the model space. While this approach can work well, it has two main drawbacks. First, it is domain-specific and difficult to generalize. For example, a model that is derived for EIT cannot be used for a different problem that is recovering the same physical quantity, namely, electrical impedance. Second, one cannot combine these approaches in methods that estimate the likelihood if needed.

\paragraph{Inversion with two autoencoders:}

Various inversion networks have been considered that leverage two autoencoders, one for the input and one for the target representation, similar to our proposed paired autoencoder framework. Kun et. al. \cite{kun2015coupled} employ coupled deep autoencoders for single-image superresolution, where they work on image patches and introduce a nonlinear mapping between the latent spaces. For seismic imaging, Feng et. al. \cite{feng2023simplifying} incorporate self-supervised learning techniques to reduce the size of the supervised learning task. That is, for a given inversion task, by separately training one masked autoencoder \cite{he2021masked} for the data and another for the model, the nonlinear, large-scale supervised learning task is replaced with a smaller-scale linear task in the latent space. Although the network structures described in \cite{kun2015coupled,feng2023simplifying} are similar to the method we discuss here, there are a few key differences.  First, our approach differs in the training stage. We define a combined loss function to train both autoencoders and the latent space mappings simultaneously. In this way, our approach can be interpreted as training an encoder/decoder with regularization to ensure data-driven compression of the model and data.  Second, we develop a theory for general paired autoencoder frameworks, enabling the use of different types of autoencoders as well as different approaches for generalization. Third, we use the data decoder and model encoder for metainformation on the expected performance of the inversion (i.e., approximating data fit properties that are not possible with LFEs) and provide approaches for solution correction.

\paragraph{Main contributions:}
The goal of this paper is to propose an architecture for LFEs that is robust for different inverse problems and experimental settings, and that can be used as a stand-alone (without the computation of the likelihood) or, within a canonical estimation process of minimizing some data fitting term where one validate the results of the LFE and improves upon it if needed. To this end, we use two different autoencoders. The first one is for the model and the second one is for the data. These can be trained separately as in \cite{feng2023simplifying, kun2015coupled} or, as proposed here, in tandem. We refer to this approach as a mirrored or paired autoencoder as two autoencoders that are related to the same object are trained. We show that similar to other LFEs in inverse problems, such an approach can be used as a stand-alone LFE; however, we also show that, unlike other LFEs, our approach enables us to assess the quality of the recovered solution.  Thus, it can be used within a standard data-fitting process for further refinement. Finally, we show that our approach is versatile and can be used in a wide range of scenarios.

\paragraph{Limitations:} While our approach provides a general framework for likelihood-free surrogate inversion, there are naturally some limitations.  First, paired autoencoders are specific to the problem at hand and may require re-computation when the problem is modified (e.g., a new forward model or data acquisition modality).  Second, a set of paired autoencoders would not work for out-of-distribution data directly.  To address this, we propose inexpensive and available metrics to detect if new data are in distribution and describe an update approach to refine solutions if needed.

\section{Mathematical Preliminaries}
In this section, we lay the mathematical foundation of our approach. Let $\bfq \in \calQ$ be a discretized parameter vector in $\bbR^n$ and assume that we have a forward model which we write in general as
\begin{equation} \label{forward}
    F(\bfq) + \bfepsilon = \bfb.
\end{equation}
Here, $\bfb \in \calB$ is a data vector in $\bbR^m$, and $F:\calQ \rightarrow \calB$ is the forward mapping which is typically a discretization of some underlying PDE model. We assume that the noise is i.i.d. Gaussian, that is,  $\bfepsilon \sim \calN({\bf0}, \sigma^2 \bfI)$. Our goal is to estimate $\bfq$ given the noisy measurements $\bfb$. Since $F$ is a forward operator that may not admit a continuous inverse operator, or, in its discrete form is highly ill-conditioned, there may not be a unique solution to the \emph{inverse problem} of recovering $\bfq$ given observations $\bfb$ and operator $F$.

Canonical techniques for solving the inverse problem utilize regularized least-squares techniques that can be viewed as a maximum a posteriori estimate in a Bayesian framework \cite{taran, Tenorio2011, somersallo}. Here, an estimator $\widehat \bfq$ can be obtained by maximizing the posterior density or solving the regularized least-squares problem,
\begin{equation} \label{regopt}
    \widehat \bfq \in \argmin_{\bfq} \  \tfrac{1}{2 \sigma^2} \norm{F(\bfq) - \bfb}^2 + R(\bfq),
\end{equation}
where $\| \cdot \|$ denotes the Euclidean norm unless otherwise stated.
The first term in \Cref{regopt} is the data misfit, which is also equivalent to the negative log of the likelihood. The second term, $R$, is the regularization functional carrying prior information about $\bfq$. As stated earlier, learning the regularization operator $R$ is one way to use data-driven machine learning in the context of inverse problems \cite{afkham2021learning}.

A second approach that can be adapted is to re-parameterize $\bfq$ and then regularize the parameterized space.  In this setting, one defines $\bfq = \bfD \bfz$ where $\bfD$ is an over-complete dictionary and $\bfz$ is a vector. It has been argued \cite{eladReview,eliasof2024over} that $\bfz$ can be regularized using the $\ell^1$ norm, thus, this reparametrization of $\bfq$ replaces $R(\bfq)$ with a simple convex function \cite{newman2023image,candes2005decoding}. A more general approach is to use a parameterized decoder setting,
\begin{equation} \label{qdec}
    \bfq = D_q(\bfz_q; \bftheta_q^{\rm d}),
\end{equation}
where $\bfz_q$ denotes the latent space variables and $\bftheta_q^{\rm d}$ are the decoder parameters. This approach is sometimes referred to as a ``plug-and-play'' approach \cite{pascual2021plug} and yields the optimization problem,
\begin{equation} \label{regoptz}
    \widehat \bfz_q \in \argmin_{\bfz_q} \ \tfrac{1}{2 \sigma^2} \norm{F(D_q(\bfz_q; \bftheta_q^{\rm d})) - \bfb}^2 + R_q(\bfz_q).
\end{equation}
Note that if one specifies a Gaussian prior probability density for $\bfz_q$, i.e., $\bfz_q \sim \mathcal{N}(\mathbf{0}, \lambda^2\bfI),$ then the regularization term becomes $R_q(\bfz_q) = \tfrac{1}{\lambda^2} \norm{\bfz_q}^2$. This has motivated the use of Generative Adversarial Networks (GANs) and Variational Autoencoders (VAEs) as a way to obtain such a decoder $D_q$. Nonetheless, in our context of solving nonlinear inverse problems, computing the forward mapping and gradient requires considerable computational resources. Furthermore, the new nonlinear operator $F(D_q(\,\cdot\,;\bftheta_q^{\rm d}))$ may be even more nonlinear leading to additional computational burden and slow convergence. We thus turn to methods that at inference do not require the computation of either the forward or the adjoint operator, or seek an update in the latent space.

\section{Paired Autoencoders Framework} \label{AE}
In the following, we provide our main contribution, a new paired autoencoder architecture for likelihood-free estimation.  We discuss our proposed framework in \Cref{MAE} and describe how to train paired autoencoders in \Cref{training}. For cases where the estimator is not sufficient or the data is very different from the training set, we describe in \Cref{inference} various techniques to detect out of distribution data and subsequently refine solutions to improve inference via updating in the latent space.

\subsection{Paired autoencoders for likelihood-free estimation} \label{MAE}
Our goal is to build an LFE for $\bfq$ given some noisy observations $\bfb$. Before we focus on this goal, let us consider two simple autoencoders. For $\bfq \in \calQ$, let $E_{q}$ and $D_{q}$ denote the encoder and decoder, which depend on network parameters $\bftheta_q^{\rm e}$ and $\bftheta_q^{\rm d}$, respectively. Similarly, we denote $E_{b}$ and $D_{b}$ to be the encoder and decoder for $\bfb \in \calB$ which in turn depend on network parameters $\bftheta_b^{\rm e}$ and $\bftheta_b^{\rm d}$, respectively. In principle, one may use any encoding/decoding architecture.

The latent space vectors $\bfz_q \in \calZ_q$ and $\bfz_b \in \calZ_b$ and the representations of $\bfq$ and $\bfb$ through the autoencoders $\widetilde \bfq$ and $\widetilde \bfb$ are given as,
\begin{subequations} \label{allEq}
\begin{align} \label{encoders}
    \bfz_q &= E_q(\bfq;\, \bftheta_q^{\rm e}), & \bfz_b &=  E_b(\bfb;\, \bftheta_b^{\rm e}),      \\ \label{decoders}
    \widetilde \bfq &= D_q(\bfz_q; \bftheta_q^{\rm d}), & \widetilde \bfb &=  D_b(\bfz_b; \bftheta_b^{\rm d}) .
\end{align}
\end{subequations}

\begin{figure}
    \centering





\begin{tikzpicture}

	\node[fill=matlab1!50, minimum width=2.0cm, minimum height=0.5cm] (x) at (0,0) {$\bfq$};
    \draw[fill=matlab2!50,draw=none]
             ([yshift=-0.2cm,xshift=0.0cm]x.south west) -- ([yshift=-1.5cm,xshift=0.5cm]x.south west) -- ([yshift=-1.5cm,xshift=-0.5cm]x.south east) -- ([yshift=-0.2cm,xshift=0.0cm]x.south east) -- cycle node (e) at ([yshift=-0.75cm,xshift=0.0cm]x.south) {$\begin{matrix}\mbox{encoder} \\[-0.0ex] E_{q} \end{matrix}$}; 
               
    \node[fill=matlab3!50, minimum width=1cm, minimum height=0.5cm] (zx) at (0,-2.3) {$\bfz_q$};    
    \draw[fill=matlab4!50,draw=none]
             ([yshift=-0.2cm,xshift=0.0cm]zx.south west) -- ([yshift=-1.5cm,xshift=-0.5cm]zx.south west) -- ([yshift=-1.5cm,xshift=0.5cm]zx.south east) -- ([yshift=-0.2cm,xshift=0.0cm]zx.south east) -- cycle node (d) at ([yshift=-0.75cm,xshift=0.0cm]zx.south) {$\begin{matrix}\mbox{decoder} \\[-0.0ex] D_{q} \end{matrix}$};  
    \node[fill=matlab1!50, minimum width=2.0cm, minimum height=0.5cm] (xt) at (0,-4.5) {$\bfq$};      

    \node[fill=matlab5!50, minimum width=2.5cm, minimum height=0.5cm] (b) at (4,0) {$\bfb$};
    \draw[fill=matlab6!50,draw=none]
             ([yshift=-0.2cm,xshift=0.0cm]b.south west) -- ([yshift=-1.5cm,xshift=0.5cm]b.south west) -- ([yshift=-1.5cm,xshift=-0.5cm]b.south east) -- ([yshift=-0.2cm,xshift=0.0cm]b.south east) -- cycle node (e) at ([yshift=-0.75cm,xshift=0.0cm]b.south) {$\begin{matrix}\mbox{encoder} \\[-0.0ex] E_{b} \end{matrix}$}; 
               
    \node[fill=matlab3!50, minimum width=1.5cm, minimum height=0.5cm] (zb) at (4,-2.3) {$\bfz_b$};    
    \draw[fill=matlab7!50,draw=none]
             ([yshift=-0.2cm,xshift=0.0cm]zb.south west) -- ([yshift=-1.5cm,xshift=-0.5cm]zb.south west) -- ([yshift=-1.5cm,xshift=0.5cm]zb.south east) -- ([yshift=-0.2cm,xshift=0.0cm]zb.south east) -- cycle node (d) at ([yshift=-0.75cm,xshift=0.0cm]zb.south) {$\begin{matrix}\mbox{decoder} \\[-0.0ex] D_{b} \end{matrix}$};  
    \node[fill=matlab5!50, minimum width=2.5cm, minimum height=0.5cm] (bt) at (4,-4.5) {$\bfb$}; 

\begin{scope}[-latex,shorten >=9pt,shorten <=9pt,line width=5pt]
    \draw[matlab2!50!matlab3]  ([yshift=-0.2cm]zb.west) to ([yshift=-0.2cm]zx.east);
    \draw[matlab1!50!matlab2] ([yshift=0.2cm]zx.east) to ([yshift=0.2cm]zb.west);    
\end{scope}
\node[matlab1!50!matlab2] (m_forward) at (2.0,-1.7) {$\bfM$};
\node[matlab2!50!matlab3] (m_inverse) at (2.0,-2.9) {$\bfM^{\dagger}$};

\begin{scope}[-latex,shorten >=9pt,shorten <=9pt,line width=5pt]
    \draw[matlab1!50!matlab2] ([yshift=-0.0cm]x.east) to ([yshift=-0.0cm]b.west);
    \draw[matlab2!50!matlab3] ([yshift=0.0cm]bt.west) to ([yshift=0.0cm]xt.east);
\end{scope}

\node[matlab1!50!matlab2] (f_forward) at (2.0,0.4) {$F$};
\node[matlab2!50!matlab3] (f_inverse) at (2.0,-4.9) {$F^{\dagger}$};
     
\end{tikzpicture}
    \caption{Architecture of our proposed paired autoencoder framework. Two autoencoders and a (linear) mapping between the latent spaces are simultaneously learned, thereby creating a \emph{paired} system.} \label{fig:pair}
\end{figure}
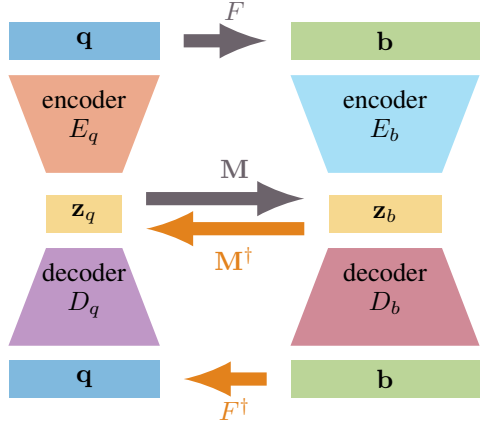
This is the usual setting of encoded and decoded spaces for autoencoders, where the aim is to have $\bfq \approx \widetilde \bfq$ and $\bfb \approx \widetilde \bfb$.  In the architecture of our proposed framework in \Cref{fig:pair}, this corresponds to the autoencoders on the left and right, respectively. In the following, for ease of notation, we omit the dependency of $E_q, E_b, D_q, D_b$ on their corresponding network parameters $\bftheta_q^{\rm e}, \bftheta_b^{\rm e}, \bftheta_q^{\rm d}, \bftheta_b^{\rm d}$ when appropriate.

Next, we consider the mapping between the encoded subspaces $\calZ_q$ and $\calZ_b$. In general, there does not exist a simple mapping between the encoded spaces. However, since we learn the mappings, we can require the resulting relation between the subspaces to be simple. In particular, we may impose the mapping to be linear or even the identity. This leads to the following set of equations,
\begin{equation}
\label{connections}
    \bfz_q \approx  \bfM^\dagger \bfz_b\quad \mbox{and} \quad        \bfz_b \approx  \bfM \bfz_q,
\end{equation}
where the combination of \Cref{allEq,connections} conveys the demands that (1) the encoders can successfully encode into the latent spaces, (2) the decoders can successfully decode from the latent spaces, and (3) the two autoencoders are \emph{paired} via a linear relationship between the two latent spaces that can be learned or assigned.
We remark that a linear correlation between latent spaces was observed in \cite{feng2023simplifying}, but since all elements were trained independently (e.g., the masked autoencoders and the linear mapping), various components were ignored including the coupling of the autoencoders and the encoder $E_q$ and decoder $D_b$. On the contrary, our framework exploits all components of the framework.  The architecture of our paired autoencoder framework is summarized in \Cref{fig:pair}.

Assuming that we can train the system such that \Cref{allEq,connections,connections} approximately hold, then we obtain an immediate likelihood-free estimator for $\bfq$ given $\bfd$. More specifically, we define an approximate likelihood-free surrogate forward and inverse mapping respectively as,
\begin{equation} \label{lfe}
    F(\bfq) \approx D_b(\bfM E_q(\bfq)) \quad \mbox{and} \quad  F^\dagger(\bfb) \approx D_q(\bfM^{\dagger} E_b(\bfb)).
\end{equation}

\subsection{Training the paired autoencoders} \label{training}

The process of training the paired autoencoder framework involves finding
parameters $\bftheta = [\bftheta_q^{\rm e}; \bftheta_b^{\rm e}; \bftheta_q^{\rm d}; \bftheta_b^{\rm d}; \vec{\bfM}; \vec{\bfM^\dagger}]\t$ that minimize the expected loss,
\begin{equation} \label{eq:loss1}
L(\bftheta) = \bbE \
    \thf \norm{D_b(E_b(\bfb; \bftheta_b^{\rm e});\bftheta_b^{\rm d}) - \bfb}^2 +
    \thf \norm{D_q(E_q(\bfq; \bftheta_q^{\rm e});\bftheta_q^{\rm d}) - \bfq}^2 + \thf S(\bfq, \bfb; \bftheta).
\end{equation}
The first two terms are the standard autoencoder losses for $\bfb$ and $\bfq$, respectively. The third term $S$ couples the autoencoders through the encoded spaces. Here, one has many choices; for instance,
\begin{align}
\label{eq:B}
    S(\bfq, \bfb; \bftheta) &=  \norm{ \bfb - D_b(\bfM E_q(\bfq; \bftheta_q^{\rm e}); \bftheta_b^{\rm d} )}^2, & &\mbox{loss in data space $\calB$, or}\\
     \label{eq:Q}  S(\bfq, \bfb; \bftheta) & =  \norm{\bfq - D_q(\bfM^\dagger E_b(\bfb; \bftheta_b^{\rm e});\bftheta_q^{\rm d})}^2,  & &\mbox{loss in parameter space $\calQ$}.
\end{align}

Depending on the objective of the application at hand, one may utilize \Cref{eq:B} for learning $\bfM$ and \Cref{eq:Q} for learning $\bfM^\dagger$. We empirically observe that for the cases presented here, both \Cref{eq:B,eq:Q} can be used with fixed mappings, e.g., $\bfM  = \bfM^\dagger =\bfI$ is sufficient.

In a data-driven approach, we have representative input-target pairs $\left\{\bfb_i,\,\bfq_i\right\}_{i = 1}^N$. These realizations $\bfq_i$ can be obtained by using, for example, the geo-statistical package \cite{diggle1998model, mariethoz2014multiple} for geoscientific applications or randomized Shepp-Logan phantoms for medical computer tomography \cite{randomizedSheppLogan}. Let us assume that we have a method that can compute the forward mapping $F(\bfq)$, e.g., by utilizing some PDE solver. In this case, we may sample \emph{coupled} pairs $\left\{\bfb_i, \bfq_i \right\}$ with $\bfb_i = F(\bfq_i) + \bfepsilon_i$ where $\bfepsilon_i \sim \calN({\bf0}, \sigma^2\bfI)$ for $i = 1,\ldots, N$. Then paired autoencoders can be trained by approximating the expected loss \Cref{eq:loss1} by its empirical loss, i.e.,
\begin{equation} \label{eq:loss2}
L_N(\bftheta) = \tfrac{1}{2N} \sum_{i = 1}^N\Big(
    \norm{D_b(E_b(\bfb_i; \bftheta_b^{\rm e});\bftheta_b^{\rm d}) - \bfb_i}^2 +
    \norm{D_q(E_q(\bfq_i; \bftheta_q^{\rm e});\bftheta_q^{\rm d}) - \bfq_i}^2 +
    S(\bfq_i, \bfb_i; \bftheta) \Big),
\end{equation}
and e.g., using a stochastic optimization algorithm to solve $\min_\bftheta L_N(\bftheta).$

\subsection{Inference via regularization in the latent space}
\label{inference}
Contrary to previous work on encoder-decoder networks where the outcome is solely the reconstruction, the paired autoencoder framework provides new opportunities for further improvement and refinement of the solution, as well as broader applicability.

We begin with a discussion of metrics that are computationally feasible/available from the paired autoencoder framework and describe how these can be used to indicate reconstruction quality. Assume that a paired auto-encoder framework is trained, giving six different networks at hand: two encoders, two decoders, and two mappings from one encoded space to the other.  That is, we have $E_b, D_b, E_q, D_q, \bfM,$ and $\bfM^{\dagger}$. Now, the simplest way to use these networks to recover a solution $\widehat \bfq$ given a new observation data vector $\bfb$ is to compute the LFE,
\begin{equation} \label{qhat}
    \widehat \bfq = D_q (\bfM^\dagger E_b(\bfb)).
\end{equation}
One approach to determine whether $\widehat \bfq$ is a sufficient approximation to $\bfq$ is to apply the original forward model $F$ and verify if the fit to the data is sufficiently met. That is, a standard approach is to compute the residual,
\begin{equation}
\label{eq:residual}
    \bfr = F\left(\widehat \bfq \right) - \bfb.
\end{equation}
If the norm of the residual is sufficiently small, then we declare $\widehat \bfq$ as a plausible solution.
However, as discussed in the introduction, a main drawback of LFEs is that it is impossible to verify their data fit properties (\Cref{eq:residual}) without computing the forward problem which can be prohibitively expensive.

Nonetheless, with the paired autoencoder framework, there are further available metrics that are inexpensive to compute and can give an indication of the quality of the reconstruction. In particular, we focus on two such metrics that are readily available,
the relative residual estimate (RRE) and the
recovered model autoencoder (RMA)
\begin{equation}
\label{eq:metrics}
{\rm RRE}(\bfb) =\|\bfb - D_b(\bfM E_q( \widehat \bfq)) \|/\|\bfb\| \quad  {\rm and} \quad
{\rm RMA}(\widehat\bfq) =\|\widehat \bfq - D_q(E_q(\widehat \bfq))\|/\|\widehat\bfq\|.
\end{equation}

These metrics assess the quality of the reconstruction in the data space (RRE) and the recovery in the model autoencoder (RMA), both using the LFE $\widehat \bfq$. While our paired autoencoder does not directly yield a probability density, the distribution of these metrics $p({\rm RRE}(\bfb),{\rm RMA}(\widehat\bfq))$ is at our disposal. For out-of-distribution (OOD) detection (e.g., \cite[Ch. 18]{murphy2023probabilistic},\cite{yang2021generalized}) without any OOD samples, we may now utilize $p({\rm RRE}(\bfb),{\rm RMA}(\widehat\bfq))$ to examine new data. If these metrics for the incoming data lie within high-probability regions, we may trust the data and the recovery; however, if the probability is small, we conclude that further investigation is required. This approach is a variation of the ideas of density thresholding \cite{bishop1994novelty,nalisnick2018do,an2015variational} and methods based on reconstruction errors \cite{hawkins2002outlier,ChenEnsembleOutlier,8999265}. Although it is known that these thresholding approaches have limitations, we show that they have merit within the paired autoencoder framework.

For solutions that require further investigation (e.g., the data fit is not small or the RRE and RMA indicate OOD), we can use the encoded subspace for correction.  We call this a \textit{latent-space-inversion} (\textit{LSI}) approach since it operates in the latent model space. Given $\bfz^{\star} = \bfM^{\dag}E_b(\bfb)$ we solve for
\begin{equation} \label{eq:fullLatentInverse}
    \bfz_{\rm lsi} \in \argmin_{\bfz \in \calZ_q} \ \thf \norm{F\left(D_q(\bfz)\right) - \bfb}^2 + \tfrac{\alpha}{2} \norm{\bfz - \bfz^{\star}}^2,
\end{equation}
where $\alpha>0$ is an appropriate regularization parameter, and the solution is given by $\bfq_{\rm lsi} = D_q(\bfz_{\rm lsi})$.

This approach is similar to methods that use the encoded space to solve the inverse problem, presented in \Cref{regoptz} with one important difference. We include a Tikhonov regularization for $\bfz$ and initialize the solution with $\bfz^{\star}$ which is typically close to the solution. Thus, even though we are using the standard ``machinery'' of forwards and adjoints to fit the data in \Cref{eq:fullLatentInverse}, our likelihood-free network provides us with a good initial guess and an encoded space to work in, which allows for quicker convergence.

\section{Theoretical Analysis}
\label{theory}
Assuming that a paired autoencoder framework is trained, giving $E_b, D_b, E_q, D_q, \bfM,$ and $\bfM^{\dagger}$, additional assumptions allow us to bound both the residual and model errors ($\| F(\widehat\bfq)-\bfb\|$ and $\| \widehat\bfq-\bfq \|$).  This distinguishes our approach from many other LFEs; given new data that we wish to invert, we have information through these bounds about residual error without evaluating the forward model $F$. Proposition \ref{prop:residual_i} provides residual bounds using either an assumption on the trained inversion error or an assumption on the forward surrogate error. \Cref{thm:error} provides a bound for the model error using assumptions on the forward surrogate and autoencoder errors.  Formal statements and proofs are included in the appendix, see \Cref{sub:bounds}.

\section{Numerical Examples}\label{sec:numerics}
We provide results for two different example applications: seismic inversion (\Cref{sec:numerics}) and electromagnetic inversion (see appendix, \Cref{sub:electro}).  Both problems require the solution of a challenging nonlinear inverse problem. We train our paired autoencoders by minimizing \Cref{eq:loss2} with coupling terms \Cref{eq:B} and \Cref{eq:Q} and with $\bfM = \bfM^\dagger = \bfI.$ For a new observation $\bfb$, one may compute the LFE, which is given by \Cref{qhat}, where $\widehat \bfq = D_q(\widehat \bfz)$ with $\widehat  \bfz =  \bfM^\dagger E_b(\bfb)$ being the latent space representation of the LFE. Our proposed approach enables refinement of the LFE via inference with regularization in the latent space, i.e., LSI by solving \Cref{eq:fullLatentInverse} with $\bfz^{\star} = \widehat \bfz$. The regularization term ensures that the optimization variables $\bfz$ stay close, in $\ell^2$ sense, to the latent variables that the network encountered while training, thereby ensuring that the decoded model is realistic. We can additionally use $\widehat \bfz$ as an initial guess for the optimization algorithm, which we denote as warm-start.

For numerical comparisons, we consider Basic Inversion (BI), which corresponds to solving \Cref{regopt} with no regularization.  We consider two initial guesses: a typical guess $\bfq_0^{\rm basic}$ and the LFE $\widehat \bfq$.  We also compare to an LSI with $\alpha=0$ and initial guess $\bfz_{\bfq_0^{\rm basic}} = E_q(\bfq_0^{\rm basic})$. This approach is akin to \Cref{regoptz}, a nonlinear plug-and-play approach where the decoder serves as the parameterization that regularizes the inversion. Here one still requires an initial model if the decoder is learned from encoding and decoding example models. In contrast, our proposed approach starts from an initial guess in the latent space, provided by the paired autoencoders. All inference approaches use a gradient-based method \cite{kingma2014adam}.

The goal of seismic inversion is to reconstruct the medium parameters, such as the spatially dependent acoustic velocity $q(x)$, from observations of the space and time-dependent wavefield, $u(x,t)$, at receiver locations. Here, we consider wavefields generated by $j = 1,2,\ldots,n_s$ controlled sources $s(x,t)_j$. The above quantities are connected by the acoustic wave equations,
\begin{equation}\label{wave}
    \grad^2 u(x,t)_{j}  - \frac{1}{q(x)} \partial_{tt} u(x,t)_{j} =  s(x,t)_j.
\end{equation}
The canonical seismic full-waveform inversion problem solves the nonconvex problem stated in \Cref{regopt}. In this case, the forward model solves the wave equation given $q(x)$ and $s(x,t)$, followed by sampling the wavefield $u(x,t)$ at the receiver locations.

It is well known that the primary challenge in FWI is obtaining a sufficiently good starting model \cite{haber2014computational}. While Tikhonov or total-variation regularizers help mitigate mundane problems related to random data noise, poor source and receiver sampling, or incorrect assumptions about the source function, they do not generally solve issues related to poor starting models \cite{esser2018total,peters2019projection}. Various learned regularizers and parameterizations have been developed \cite{haten, he2021reparameterized, zhu2022integrating, TaufikLearned, 10274489}, but they, too, need to solve the inverse problem from scratch. Other approaches learn diffusion models \cite{10328845} or flows \cite{siahkoohi2022wave, yin2024wise} that start from reasonably accurate initial models or require physics-based imaging as a prerequisite.

Consider an acoustic velocity model $\bfq \in \mathbb{R}^{n_z \times n_x}$ and corresponding data $\bfb \in \mathbb{R}^{n_s \times n_r \times n_t}$ where $n_z$ and $n_x$ are the number of grid points in depth and lateral directions, respectively, and for the data, $n_s$, $n_r$ and $n_t$ denote the number of sources, receivers, and time samples respectively. A velocity model and corresponding data for one source are shown in \Cref{fig:seismic_data_model}.

\begin{figure}
    \centering
     \includegraphics[width=0.35\textwidth]{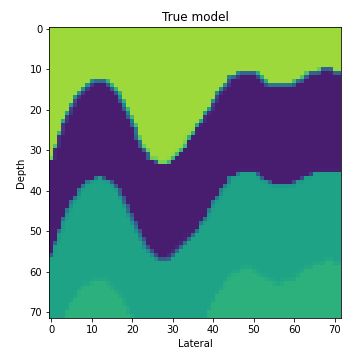}
    \includegraphics[width=0.35\textwidth]{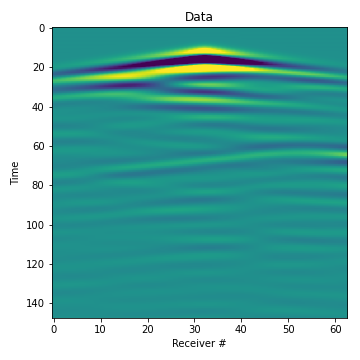}
    \caption{Seismic Inversion Example: An acoustic velocity model and the corresponding data for a single source. There are $30$ sources in total.} \label{fig:seismic_data_model}
\end{figure}

We generated about 18,000 training and 5,000 validation examples, taken from the OpenFWI database \cite{deng2022openfwi}. Wavefield data was generated using DeepWave \cite{richardson_alan_2023}. Details of the network design are given in \Cref{sub:seismicdetails}.
Once the paired autoencoders are trained, we consider both BI and LSI reconstructions with two initial guesses: first, a typical laterally invariant guess of the velocity model for $\bfq_0^{\rm basic}$, and second, the LFE $\widehat \bfq$. Both are shown in the top row of figures in \Cref{fig:seismicimages}.

\Cref{tab:seismic_quant_results} displays quantitative results of data misfit and model error for each of the four inversion approaches, averaged over $600$ samples. It is clear and expected that BI with $\bfq_0^{\rm basic}$ performs poorly in terms of data fitting and model estimation. The other three methods manage to fit the data more accurately. Note that data fitting is just a vehicle to arrive at a velocity model and that the two latent-space inversion approaches utilize highly informative regularization provided by the decoder. In terms of model error, our proposed approach (LSI with $\widehat \bfz$) achieves the best results.

\begin{table}
    \caption{Seismic Inversion. Means and (standard deviations) of the data misfit and model errors, calculated over $600$ examples. All errors are in terms of relative $\ell^2$ error.\\}
    \label{tab:seismic_quant_results}
    \centering
    \resizebox{\textwidth}{!}{%
    \begin{tabular}{llcccc}
    \toprule

&\multirow{2}{*}{\textbf{Initial Guess}}&\multicolumn{2}{c}{\textbf{Data Misfit}} & \multicolumn{2}{c}{\textbf{Model Error}} \\
&  & Initial & Final & Initial & Final \\      \hline
\multirow{2}{*}{BI}& basic start, $\bfq_0^{\rm basic}$  & 0.410 (0.227) & 0.123 (0.107) & 0.157 (0.056) & 0.160 (0.060)\\
&warm start, $\widehat\bfq$   & 0.306 (0.158) & 0.045 (0.034) & 0.140 (0.055) & 0.138 (0.056)\\ \hline
\multirow{2}{*}{LSI}&basic start, $\bfz_{\bfq_0^{\rm basic}}$, $\alpha=0$  & 0.410 (0.227)& 0.056 (0.041) & 0.157 (0.056) & 0.127 (0.069)\\
&warm start, $\widehat\bfz, \alpha = 1 $ (ours) & 0.306 (0.158) & 0.059 (0.028) & 0.140 (0.055) & \textbf{0.108} (0.047)\\
    \bottomrule
    \end{tabular}}
\end{table}

\begin{figure}
\begin{tabular}{cc|cc} \hline
\multicolumn{2}{c|}{BI} & \multicolumn{2}{c}{LSI} \\ \hline
basic start & warm start & basic start & warm start
\\ \hline
        \includegraphics[width=0.22\textwidth]{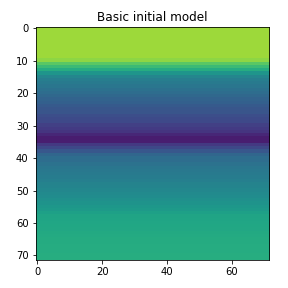}&
        \includegraphics[width=0.22\textwidth]{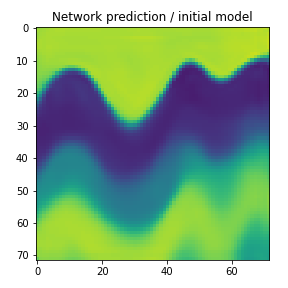}&
        \includegraphics[width=0.22\textwidth]{figures/FWI/basic_starting.png}&
        \includegraphics[width=0.22\textwidth]{figures/FWI/network_prediction.png}\\
        \includegraphics[width=0.22\textwidth]{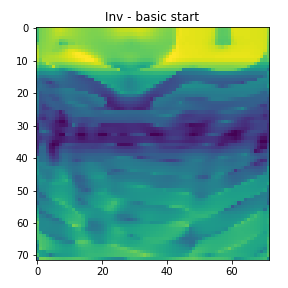}&
        \includegraphics[width=0.22\textwidth]{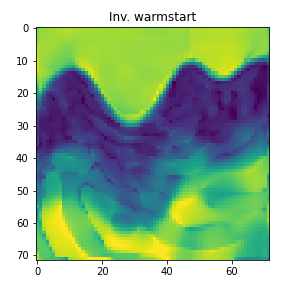}&
        \includegraphics[width=0.22\textwidth]{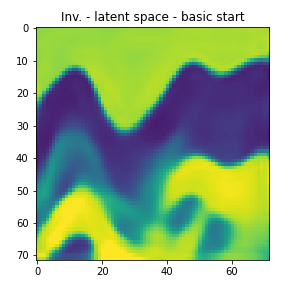}&
        \includegraphics[width=0.22\textwidth]{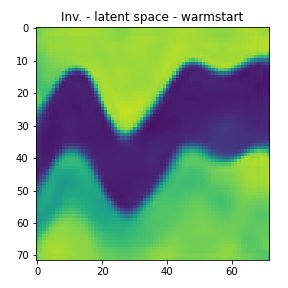}\\
        \includegraphics[width=0.22\textwidth]{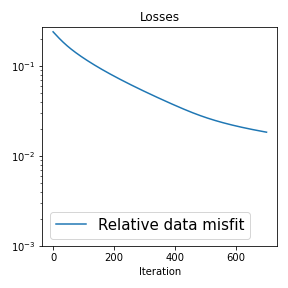}&
        \includegraphics[width=0.22\textwidth]{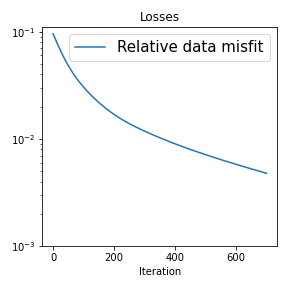}&
        \includegraphics[width=0.22\textwidth]{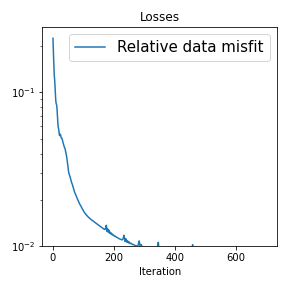}&
        \includegraphics[width=0.22\textwidth]{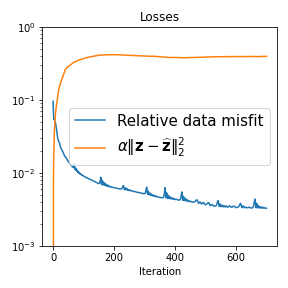}
    \end{tabular}
    \caption{Seismic reconstructions for BI and LSI, with the basic initial guess and the warm start, are provided in the second row of figures. Both BI approaches suffer from `wavefront' like artifacts.}
\label{fig:seismicimages}
\end{figure}

Seismic reconstructions obtained after $700$ iterations are provided in \Cref{fig:seismicimages}, along with the losses per iteration.  For our approach, we also provide the value of the regularization term at each iteration, highlighting the importance of the regularizer in \Cref{eq:fullLatentInverse}. In \Cref{fig:seismic_errors} we provide the relative model errors per iteration of the four considered approaches. Although these model errors are not observable in practice and cannot be used to determine a stopping criterion for optimization, they provide convergence insight. We observe that both unregularized (BI) approaches exhibit an increased error slightly after early improvements. The main difference between the LSI methods is that the warm-started (proposed) approach results in a stable and lower model error, while the generic decoder parameterization approach arrives at a realistic velocity model, but with a much higher model error.

Next, we present results according to the discussion in \Cref{inference}. We use a different dataset from the OpenFWI database \cite{deng2022openfwi} than the one used for training and validation, and we assess the ability of
\begin{figure}
    \centering
    \includegraphics[width=0.65\linewidth]{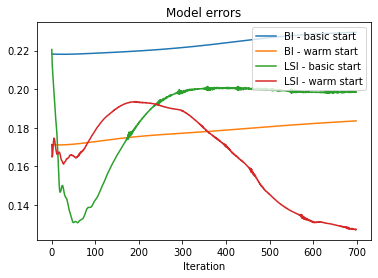}
    \caption{Relative model errors per iteration for the four approaches for seismic inversion. }
    \label{fig:seismic_errors}
\end{figure}
RRE and RMA metrics from \Cref{eq:metrics} to detect if new data is OOD. Note that these metrics are readily available, cheap to compute, and represent a novelty as they are not available for other approaches (e.g., those that train two autoencoders independently).

\Cref{fig:seismic_OOD_test} displays the validation dataset density in terms of the RRE and RMA.
These values are provided for the OOD dataset using scattered points, where the color corresponds to the reconstruction error (not available in practice).  \Cref{fig:seismic_OOD_test} also shows the LFE for two highlighted data points: one with a high density and low error and one with a low density and high error. Since the OOD data do not show significant overlap with the higher-density areas, this example suggests that we can detect OOD data and predict LFE performance using these metrics. The reconstruction error on OOD data is generally high, so detection is the primary objective.

\begin{figure}[h]
    \centering
    \includegraphics[width=0.98\linewidth]{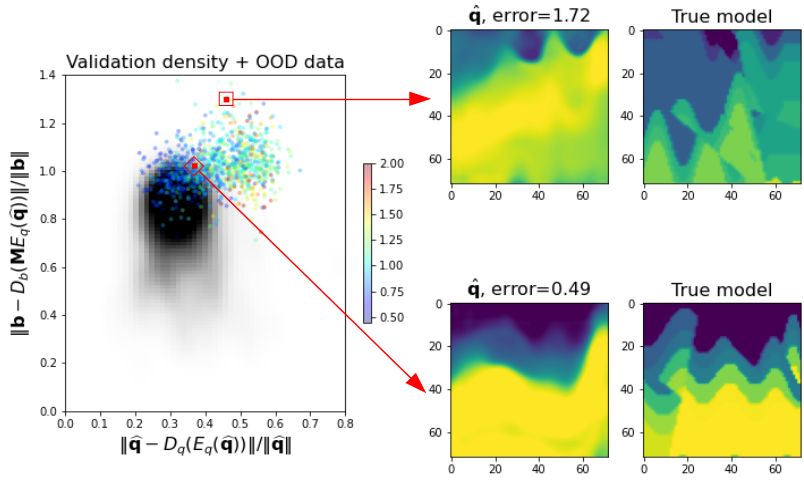}
    \caption{The density in terms of RRE and RMA for the validation set is plotted in gray-scale (with large densities in black and low densities in white).
    Scattered points correspond to metrics obtained from OOD data and are color-coded based on reconstruction error $\|\widehat \bfq - \bfq \|$. Two model predictions are provided from the paired autoencoder.}
    \label{fig:seismic_OOD_test}
\end{figure}

\section{Conclusions}
We present paired autoencoders for data-driven inversion of large-scale, ill-posed inverse problems arising from discretized PDEs. Our likelihood-free estimator addresses key limitations of other LFEs.  We describe an approach to refine initial inversion guesses by utilizing latent representations. Each of the six trained networks
can be used to provide surrogates, regularize iterative methods, and bound errors/residuals. Numerical results show that this method can outperform other techniques, providing more accurate inversions with fewer function evaluations by leveraging data.

\paragraph{Acknowledgement:} This work was partially supported by the National Science Foundation (NSF) under grant  DMS-2152661 (M. Chung), DMS-2341843 (J. Chung).  This material is based upon work partially supported by the U.S. Department of Energy, Office of Science, Office of Advanced Scientific Computing Research, Department of Energy Computational Science Graduate Fellowship under Award Number DE-SC0024386 (E. Hart).


\printbibliography

\section{Appendix}
\label{appendix}

Here, we include theoretical bounds for the residual norm and reconstruction error in \Cref{sub:bounds}.  These results provide insight regarding the described metrics for new data being ``in distribution''.  Then in \Cref{sub:electro} we provide additional numerical results for the inverse electromagnetic problem, and in \Cref{sub:seismicdetails} we describe specifics of the networks used.

\subsection{Theoretical bounds}

\label{sub:bounds}
We develop various bounds for the paired autoencoder networks, using $\bfq\in\calQ$ to denote the true model and $\bfb \in \calB$ for the observed data, with
$$\bfb = F(\bfq)+\bfepsilon.$$ Assume that a paired autoencoder framework is trained, giving $E_b, D_b, E_q, D_q, \bfM,$ and $\bfM^{\dagger}$.  For the remainder of this section, we additionally denote the following inverse/forward approximations,
\begin{equation*}
    \widehat\bfq=D_q (\bfM^\dagger E_b(\bfb)), \quad
    \widehat{\bfb}=D_b (\bfM E_q (\widehat\bfq)),\quad \text{ and } \quad \bar\bfb=D_b (\bfM E_q( \bfq)).
\end{equation*}

For problems where the forward model is computationally expensive, evaluating $F(\widehat \bfq)$ can be cumbersome.  The paired autoencoder framework provides an alternative way to check if an LFE is good without having to fit the data.  Metrics such as \Cref{eq:metrics} are readily available from the paired autoencoder framework, they are cheap to compute, and they have proven to be relevant in practice (see \Cref{fig:seismic_OOD_test}).  Next, we will show that they are also theoretically motivated, as they appear in bounds for the residual and model error.

\begin{proposition} \label{prop:residual_i}
Let $F:\calQ \to \calB$ be Lipschitz continuous on the metric spaces $(\calQ,\norm{\mdot}$) and $(\calB,\norm{\mdot})$ with Lipschitz constant $L\geq 0$. Further assume there exists an $\varepsilon_q\geq 0$ such that  $\norm{\widehat\bfq - \bfq}\leq \varepsilon_q$ for all $\bfq \in \calQ$.
Then $\norm{F(\widehat \bfq) - \bfb}\leq L \varepsilon_q + \norm{\bfepsilon}$ and $\|F(\widehat \bfq) - \bfb \| \leq \|F(\widehat \bfq) - D_b (\bfM E_q(\widehat \bfq))\| +\| \widehat{\bfb} - \bfb \|$.
\end{proposition}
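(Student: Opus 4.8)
The plan is to prove the two bounds independently; each rests on the triangle inequality, and the first additionally uses the Lipschitz hypothesis together with the assumed noise model.

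For the first inequality, I would begin from the data model $\bfb = F(\bfq) + \bfepsilon$, so that $F(\widehat \bfq) - \bfb = \big(F(\widehat \bfq) - F(\bfq)\big) - \bfepsilon$. The triangle inequality then gives $\norm{F(\widehat \bfq) - \bfb} \leq \norm{F(\widehat \bfq) - F(\bfq)} + \norm{\bfepsilon}$. I would control the first summand using Lipschitz continuity of $F$ and the hypothesized inversion-error bound, namely $\norm{F(\widehat \bfq) - F(\bfq)} \leq L \norm{\widehat \bfq - \bfq} \leq L \varepsilon_q$, which immediately yields $\norm{F(\widehat \bfq) - \bfb} \leq L \varepsilon_q + \norm{\bfepsilon}$.

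For the second inequality, I would insert the surrogate data $\widehat{\bfb} = D_b(\bfM E_q(\widehat \bfq))$ as an intermediate point and apply the triangle inequality once: $\norm{F(\widehat \bfq) - \bfb} = \norm{\big(F(\widehat \bfq) - \widehat{\bfb}\big) + \big(\widehat{\bfb} - \bfb\big)} \leq \norm{F(\widehat \bfq) - \widehat{\bfb}} + \norm{\widehat{\bfb} - \bfb}$. Expanding $\widehat{\bfb}$ in the first term recovers exactly $\norm{F(\widehat \bfq) - D_b(\bfM E_q(\widehat \bfq))}$, giving the stated bound. The appeal of this decomposition is that the second term $\norm{\widehat{\bfb} - \bfb}$ is, up to normalization by $\norm{\bfb}$, the RRE metric of \Cref{eq:metrics}, which is computable without ever evaluating the expensive forward map $F$.

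I do not anticipate a genuine technical obstacle, since both parts reduce to the triangle inequality plus a single application of the Lipschitz property. The point worth emphasizing is interpretive rather than computational: in the second bound the first summand $\norm{F(\widehat \bfq) - D_b(\bfM E_q(\widehat \bfq))}$ is the forward-surrogate mismatch and is not directly observable, so the bound is practically informative only when the learned surrogate $D_b(\bfM E_q(\mdot))$ faithfully approximates $F$ on in-distribution inputs. Achieving exactly this is the purpose of the coupling loss $S$ during training, which ties the bound back to the training objective and motivates why RRE can serve as a cheap proxy for the true residual.
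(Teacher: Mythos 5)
Your proof is correct and follows essentially the same route as the paper's: the first bound via the triangle inequality through $F(\bfq)$ combined with Lipschitz continuity and the noise model, and the second via inserting $\widehat{\bfb}=D_b(\bfM E_q(\widehat\bfq))$ as the intermediate point. The interpretive remarks you add about the forward-surrogate mismatch and the RRE metric match the discussion the paper gives immediately after its proof.
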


\begin{proof}
By triangular inequality we get
\begin{align*}
\norm{F(\widehat \bfq) - \bfb} = \norm{F(\widehat \bfq) - F(\bfq) + F(\bfq)  - \bfb} \leq \norm{F(\widehat \bfq) - F(\bfq)} +\norm{F(\bfq)  - \bfb} \leq L \epsilon_q + \norm{\bfepsilon}.
\end{align*}
For the second statement we have
\begin{align*}\label{eq:boundqq}
\norm{F(\widehat \bfq) - \bfb} & = \norm{F(\widehat \bfq) - D_b (\bfM E_q(\widehat \bfq)) + D_b(\bfM E_q(\widehat \bfq)) - \bfb} \\
&\leq \norm{F(\widehat \bfq) - D_b (\bfM E_q(\widehat \bfq))} + \| \widehat{\bfb} - \bfb\|.
\end{align*}
\end{proof}
We remark that the two bounds give insight regarding the performance of the paired autoencoder framework, for a new sample.  For the first bound, we do not have access to the true model $\bfq$, so we must make an assumption that the error $\epsilon_q$ is small, i.e., the trained inversion is performing as expected. In that case, the bound is dominated by the last term $\|\bfepsilon\|$, which is just the amount of noise or measurement error in the data.

The second statement avoids the assumption on $\|\widehat\bfq -\bfq\|$, but for tight bounds assumes $D_b (\bfM E_q(\mdot))$ is a good surrogate for the forward model $F(\mdot)$ for any $\widehat\bfq \in \calQ$.
If the forward surrogate is good (i.e., the first term vanishes), the bound is dominated by  $\| \widehat{\bfb} - \bfb \|$, which is exactly the RRE defined in \Cref{eq:metrics}, up to a normalizing constant. Recall that we employed the RRE metric for detecting OOD samples.

Hence, if the forward surrogate is good and the RRE is small, we can expect the norm of the residual $\norm{F(\widehat \bfq) - \bfb}$ to be small, justifying our bound. Conversely, if the RRE is large, the residual bound may be large, which merits further investigation.

The residual norm is critical for determining fit to data, but oftentimes in inverse problems, we are also interested in the model error $||\widehat\bfq-\bfq||$. Although not computable in practice, it can be insightful to have an upper bound, which we provide next.

\begin{proposition} \label{prop:error1}
Let $D_q (E_q(\mdot)):\calQ \to \calQ$ be a \emph{contractive mapping} on the metric space $(\calQ,\norm{\mdot}$), i.e.,  $\norm{D_q (E_q(\bfq_1)) - D_q (E_q(\bfq_2))} \leq L \norm{\bfq_1-\bfq_2}$ for some $0\leq L< 1$ and any $\bfq_1,\bfq_2 \in \calQ$.
Then $\norm{\widehat \bfq - \bfq}\leq \frac{1}{1-L} (\| \widehat \bfq - D_q(E_q(\widehat \bfq))\| + \|  D_q(E_q(\bfq)) - \bfq\|  )$.
\end{proposition}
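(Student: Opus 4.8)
The plan is to bound $\norm{\widehat\bfq - \bfq}$ by inserting the autoencoder reconstructions $D_q(E_q(\widehat\bfq))$ and $D_q(E_q(\bfq))$ as intermediate terms, so that the difference decomposes into two reconstruction errors plus a term to which the contraction hypothesis applies. Concretely, I would first write
\[
\widehat\bfq - \bfq = \bigl(\widehat\bfq - D_q(E_q(\widehat\bfq))\bigr) + \bigl(D_q(E_q(\widehat\bfq)) - D_q(E_q(\bfq))\bigr) + \bigl(D_q(E_q(\bfq)) - \bfq\bigr),
\]
and apply the triangle inequality to split the right-hand side into three separate norm terms.

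The first and third terms are exactly the autoencoder reconstruction errors at $\widehat\bfq$ and at $\bfq$ respectively (the former being the RMA metric of \Cref{eq:metrics} up to a normalizing constant), so they are retained as they stand. The crucial step is the middle term: by the contractive assumption on $D_q(E_q(\mdot))$ I would bound
\[
\norm{D_q(E_q(\widehat\bfq)) - D_q(E_q(\bfq))} \le L \,\norm{\widehat\bfq - \bfq},
\]
which re-expresses it in terms of the very quantity we are trying to bound. Substituting this into the triangle inequality gives
\[
\norm{\widehat\bfq - \bfq} \le \norm{\widehat\bfq - D_q(E_q(\widehat\bfq))} + L\,\norm{\widehat\bfq - \bfq} + \norm{D_q(E_q(\bfq)) - \bfq}.
\]

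To finish, I would collect the $\norm{\widehat\bfq - \bfq}$ terms, obtaining $(1-L)\norm{\widehat\bfq - \bfq}$ on the left, and divide by $1-L$, which is strictly positive precisely because $0 \le L < 1$. This last point is exactly where contraction (rather than mere Lipschitz continuity) is essential: if $L \ge 1$ the rearrangement would either be vacuous or reverse the inequality.

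I do not expect a genuine obstacle here, as the argument is a standard one-line fixed-point / perturbation estimate. The only things to get right are the choice of intermediate insertions, so that the contraction can be invoked on exactly one of the three pieces, and the observation that $1-L>0$ permits division without flipping the inequality.
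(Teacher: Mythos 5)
Your argument is correct and is exactly the paper's proof: the same insertion of $D_q(E_q(\widehat\bfq))$ and $D_q(E_q(\bfq))$, the same triangle inequality, the same use of the contraction constant on the middle term, and the same rearrangement using $1-L>0$. Nothing to add.
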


\begin{proof}
By triangular inequality we get
\begin{align*}
\norm{\widehat \bfq - \bfq} & = \norm{\widehat \bfq
- D_q(E_q(\widehat \bfq)) + D_q(E_q(\widehat \bfq)) - D_q(E_q(\bfq)) + D_q(E_q(\bfq)) - \bfq
} \\
& \leq \norm{\widehat \bfq
- D_q(E_q(\widehat \bfq))} + \norm{D_q(E_q(\widehat \bfq)) - D_q(E_q(\bfq))} + \norm{D_q(E_q(\bfq)) - \bfq}\\
& \leq \norm{\widehat \bfq
- D_q(E_q(\widehat \bfq))} + L\norm{\widehat \bfq - \bfq} + \norm{D_q(E_q(\bfq)) - \bfq}.
\end{align*}
Therefore
\begin{align*}
\norm{\widehat \bfq - \bfq}
& \leq \tfrac{1}{1-L}\left(\norm{\widehat \bfq
- D_q(E_q(\widehat \bfq))} + \norm{D_q(E_q(\bfq)) - \bfq}\right).
\end{align*}
\end{proof}

Similar to the previous proposition, if we assume that the autoencoder performs well for any $\bfq \in \calQ$, we can conclude that the error $\norm{\widehat \bfq - \bfq}$ is small. Notice that the first term in the bound is computable and is exactly the RMA metric in \Cref{{eq:metrics}}, up to a normalizing constant, that we used to detect OOD samples.
Thus, if the RMA is large, we cannot expect the overall model error to be small, and further updates or refinements are warranted.  If, however, the RMA is small and we expect $\norm{\bfq - D_q(E_q(\bfq))}$ to be small and $L$ is not too close to $1$, then we can expect the error to be small.

We can also get a bound using assumptions on the continuity and error of the autoencoders, as well as the forward and inverse surrogates.

\begin{theorem}
\label{thm:error}
    Let $E_b:\calB \to \calZ_b$ and $D_q:\calZ_q \to \calQ$ be Lipschitz continuous on their corresponding metric spaces with Lipschitz constants $L_b, L_q \geq0$ each equipped with sub-multiplicative norms and $\norm{\bar\bfb-\bfb} \leq \delta$ for $\delta \geq 0$. Further, assume
    for any $\bfz_b \in \calZ_b$ $\bfz_q \in \calZ_q$, and $\bfq \in \calQ$ there exist constants $\xi_b,\xi_M,\xi_q \geq 0$ such that
    $$\norm{E_b(D_b(\bfz_b)) - \bfz_b} \leq \xi_b, \qquad \| \bfM^\dagger \bfM \bfz_q - \bfz_q \| \leq \xi_M,\qquad\text{and}\qquad\norm{D_q(E_q(\bfq)) - \bfq} \leq \xi_q.$$
    Then
    \begin{equation*}
        \norm{\widehat\bfq-\bfq} \leq L_q \left(\norm{\bfM^\dagger}  \left(L_b \delta +  \xi_b\right)+\xi_M \right) + \xi_q.
    \end{equation*}

\end{theorem}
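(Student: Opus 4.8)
The plan is to bound $\norm{\widehat\bfq - \bfq}$ by inserting intermediate quantities and applying the triangle inequality repeatedly, arranged so that each hypothesized constant ($\delta, \xi_b, \xi_M, \xi_q$) and each Lipschitz constant enters exactly once. The natural first move is to peel off the model autoencoder error: write $\norm{\widehat\bfq - \bfq} \le \norm{\widehat\bfq - D_q(E_q(\bfq))} + \norm{D_q(E_q(\bfq)) - \bfq}$, where the second term is immediately $\le \xi_q$ by the reconstruction hypothesis on $D_q\circ E_q$. Since $\widehat\bfq = D_q(\bfM^\dagger E_b(\bfb))$ and $D_q(E_q(\bfq))$ both have the form $D_q(\,\cdot\,)$, the Lipschitz continuity of $D_q$ collapses the first term to $L_q\,\norm{\bfM^\dagger E_b(\bfb) - E_q(\bfq)}$. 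Everything then hinges on controlling the latent-space discrepancy $\norm{\bfM^\dagger E_b(\bfb) - E_q(\bfq)}$.

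To bound this latent discrepancy I would introduce two bridging quantities, $\bfM^\dagger E_b(\bar\bfb)$ and $\bfM^\dagger \bfM E_q(\bfq)$, chosen precisely because $\bar\bfb = D_b(\bfM E_q(\bfq))$ is the object that connects the data space to the latent space. A three-term triangle inequality then splits the discrepancy into $\norm{\bfM^\dagger(E_b(\bfb) - E_b(\bar\bfb))}$, $\norm{\bfM^\dagger(E_b(\bar\bfb) - \bfM E_q(\bfq))}$, and $\norm{\bfM^\dagger\bfM E_q(\bfq) - E_q(\bfq)}$. For the first term, sub-multiplicativity factors out $\norm{\bfM^\dagger}$ and Lipschitz continuity of $E_b$ gives $\le \norm{\bfM^\dagger} L_b \norm{\bfb - \bar\bfb} \le \norm{\bfM^\dagger} L_b \delta$. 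For the second, sub-multiplicativity again extracts $\norm{\bfM^\dagger}$, and the key observation is that $E_b(\bar\bfb) = E_b(D_b(\bfM E_q(\bfq)))$, so the latent reconstruction hypothesis applied with $\bfz_b = \bfM E_q(\bfq)$ yields $\norm{E_b(\bar\bfb) - \bfM E_q(\bfq)} \le \xi_b$, hence $\le \norm{\bfM^\dagger}\xi_b$. The third term is exactly $\xi_M$, via the hypothesis with $\bfz_q = E_q(\bfq)$.

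Collecting the three pieces gives $\norm{\bfM^\dagger E_b(\bfb) - E_q(\bfq)} \le \norm{\bfM^\dagger}(L_b\delta + \xi_b) + \xi_M$, and multiplying by $L_q$ and adding $\xi_q$ recovers the claimed bound. The argument is essentially bookkeeping, so no single step is analytically deep; the only real obstacle is choosing the bridging terms correctly. One must recognize that $\bar\bfb$ is the natural object to insert, since it is what simultaneously lets Lipschitz continuity of $E_b$ absorb the surrogate data-space gap $\delta = \norm{\bar\bfb-\bfb}$ and lets $\xi_b$ absorb the $E_b\circ D_b$ latent mismatch. A secondary point of care is to invoke the sub-multiplicative-norm hypothesis each time $\bfM^\dagger$ is factored out of a norm, as this is exactly what justifies the factor $\norm{\bfM^\dagger}$ appearing in the final estimate.
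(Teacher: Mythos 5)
Your proposal is correct and follows essentially the same route as the paper's proof: peel off $\xi_q$ via Lipschitz continuity of $D_q$, then control the latent discrepancy $\norm{\bfM^\dagger E_b(\bfb) - E_q(\bfq)}$ by inserting $\bfM^\dagger\bfM E_q(\bfq)$ and using $\bar\bfb = D_b(\bfM E_q(\bfq))$ as the bridge to invoke $L_b\delta$, $\xi_b$, and $\xi_M$. The only cosmetic difference is that you perform one three-term triangle inequality where the paper nests two two-term splits; the intermediate quantities and the use of each hypothesis are identical.
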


\begin{proof}
Using the definition of $\widehat \bfq$ and the triangular inequality, we have
\begin{align*}
\norm{\widehat \bfq - \bfq} & = \norm{D_q(\bfM^\dagger E_b(\bfb)) - D_q(E_q(\bfq))+D_q(E_q(\bfq))- \bfq} \\
& \leq L_q \norm{\bfM^\dagger E_b(\bfb) - E_q(\bfq)} + \norm{D_q(E_q(\bfq))- \bfq} \leq L_q \norm{\bfM^\dagger E_b(\bfb) - E_q(\bfq)} + \xi_q.
\end{align*}

We also have the following bound in the latent space,
\begin{align*}
\norm{\bfM^\dagger E_b(\bfb) - E_q(\bfq)} & = \norm{\bfM^\dagger E_b(\bfb) - \bfM^\dagger \bfM E_q(\bfq) + \bfM^\dagger \bfM E_q(\bfq) - E_q(\bfq)} \\
& \leq \norm{\bfM^\dagger E_b(\bfb) - \bfM^\dagger \bfM E_q(\bfq) } + \norm{\bfM^\dagger \bfM E_q(\bfq)  - E_q(\bfq)} \\
& \leq \norm{\bfM^\dagger} \norm{E_b(\bfb) - \bfM E_q(\bfq)} + \xi_M
\end{align*}
where since
\begin{align*}
 \norm{E_b(\bfb) - \bfM E_q(\bfq)} & = \norm{E_b(\bfb) -E_b(\bar\bfb) + E_b(\bar\bfb)- \bfM E_q(\bfq)} \\
& \leq \norm{E_b(\bfb) -E_b(\bar\bfb)} + \norm{E_b(\bar\bfb)- \bfM E_q(\bfq)}\\
& \leq L_b \delta + \norm{E_b(D_b(\bfM E_q(\bfq))- \bfM E_q(\bfq)}\\
& \leq L_b \delta + \xi_b,
\end{align*}
we get
\begin{equation*}
\norm{\widehat \bfq - \bfq} \leq L_q\left(\norm{\bfM^\dagger} (L_b \delta + \xi_b) + \xi_M \right) + \xi_q.
\end{equation*}
\end{proof}

Notice that this bound incorporates all learned mappings.  We observe that in addition to the Lipschitz constants, the bound relies on the accuracy of the model autoencoder $\xi_q$, the data autoencoder in the latent space $\xi_b$, and the invertibility of the latent mappings $\xi_M$. Moreover, the value of $\delta$ relies on the accuracy of the forward surrogate and the noise in the data.
In the special case where $\bfM^\dagger \bfM = \bfI$, then $\xi_M$ vanishes.  Similarly, if the model and data autoencoders compress their inputs well, then $\xi_q$ and $\xi_b$ are small.
Overall, these bounds distinguish our method from \cite{feng2023simplifying, kun2015coupled}, in that we do not treat $D_b$ and $E_q$ as auxiliary.

\subsection{Inverse electromagnetic problem}
\label{sub:electro}
Another challenging inverse problem is to estimate the presence of water or minerals in the Inverse Electromagnetic Problem (see \cite{haberBook2014} for details). For this problem, the forward problem is Maxwell's equations,
\begin{eqnarray}
    \label{maxwell}
    B_t = -\nabla \times E \quad \quad \nabla \times (\mu^{-1} B) - \sigma E = J.
\end{eqnarray}
Here $E$ is the electric field, $B$ is the magnetic field, $J$ is the source term, $\mu$, is the magnetic susceptibility, and $\sigma$ is the conductivity. The equations are equipped with appropriate initial and boundary conditions. For the problem we solve here, we use a common data collection approach where the magnetic field, $B(t)$, is recorded above the surface of the earth, usually by flying. The goal is to recover the conductivity $\sigma(\bfx)$ given the data $B$.
The problem is discretized on a staggered grid and the forward problem can be written as (see \cite{haberBook2014} for details)
\begin{eqnarray}
    \label{discmax}
    \bfB_t = - {\bfC} \bfM((\mu\sigma)^{-1}) \bfC^{\top} \bfB, \quad \quad \bfB(0) = \bfB_0.
\end{eqnarray}
Here, $\bfC$ is the discretization of the $\nabla \times$ operator and $\bfM$ is a mass matrix.
The backward Euler method is used to integrate the system in time obtaining the simulated data.
The problem is nonlinear with respect to $\sigma$, and the solution of each forward problem requires the solution of $n$ linear systems of equations, where $n$ is the number of time steps.
To obtain a data set we use geo-statistical models.
A few of these models, including the reconstructed solutions using our trained paired autoencoder networks are provided in \Cref{aem}.
\begin{figure}[h]
\begin{tabular}{ccc}
    \includegraphics[width=4.5cm]{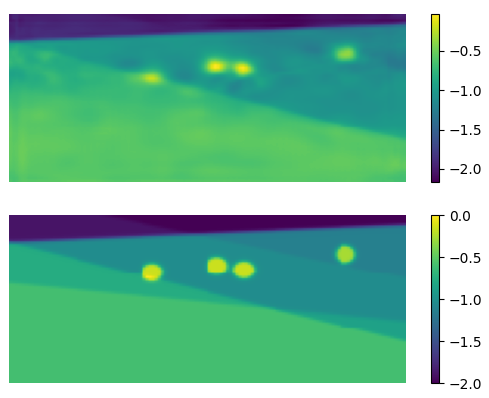}
 &
    \includegraphics[width=4.5cm]{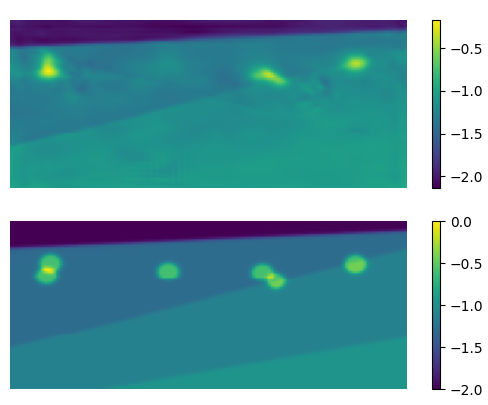}
 &
    \includegraphics[width=4.5cm]{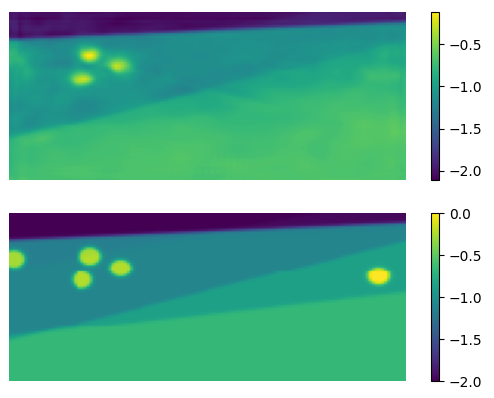}
\end{tabular}
\caption{Three models from the geo-statistical distribution (bottom) and their recoveries (top).} \label{aem}
\end{figure}

Inference results are presented in \Cref{tab:em_quant_results}. Once again we see that the LFE yields a reasonable recovery that approximately fits the data.  Moreover, by combining a ``warm start'' approach with a few steps of LSI, we can obtain models with improved accuracy and outperform other methods.

\begin{table}[h]
    \caption{Means of the data misfit and model errors for the electromagnetic inverse problem, calculated over $50$ examples. All errors are in terms of relative $\ell^2$ error.\\}
    \label{tab:em_quant_results}
    \centering
    \begin{tabular}{llcccc}
    \hline
        &\multirow{2}{*}{\textbf{Initial Guess}}&\multicolumn{2}{c}{\textbf{Data Misfit}} & \multicolumn{2}{c}{\textbf{Model Error}} \\
        &  & Initial & Final & Initial & Final \\      \hline
        \multirow{2}{*}{BI}& basic start, $\bfq_0^{\rm basic}$  & 0.922 & 0.045 & 1.83 & 0.101\\
        &warm start, $\widehat\bfq$   & 0.092 & 0.041 & 0.143 & 0.104\\ \hline
        \multirow{2}{*}{LSI}&basic start, $\bfz_{\bfq_0^{\rm basic}}$, $\alpha=0$ & 0.921 & 0.043 & 1.65 & 0.092\\
        &warm start, $\widehat\bfz, \alpha = 1 $ (ours) & 0.092 & 0.039 & 0.143 & \textbf{0.082}\\ \hline
    \end{tabular}
\end{table}

\subsection{Neural network details}\label{sub:seismicdetails}
Here, we provide the network designs used for the numerical experiments.

\textbf{Seismic inversion.} For this experiment we use a shared latent representation for the data and the model, i.e., $\bfM  = \bfM^\dagger =\bfI$. The encoders and decoders are all characterized by ResNet \cite{he2016deep} blocks interleaved with average pooling.

Our ResNet block design is given by
\begin{equation}\label{eq:resnet_block}
    \tX_{i+1} = \tX_i - h \tK_1 \sigma(N(\tK_2 \tX_i))
\end{equation}
where $i$ is the layer index, $\tX$ the network state, $\sigma(\cdot)$ is the SiLU nonlinear activation function, $N(\cdot)$ is the batch normalization operation, and $\tK_1$ and $\tK_2$ correspond to two different learnable convolution operators with $3 \times 3$ kernels while preserving the number of channels. Each level in the multi-level encoder or decoder then follows as
\begin{equation}\label{eq:resnet_block_full}
    \tX_{i+1} = \tK_{cc} \tX_i \rightarrow 3\times \text{ResNet block (\Cref{eq:resnet_block})} \rightarrow \tX_{i+1} = P(\tX_i),
\end{equation}
where $P(\cdot)$ is the average pooling operation that halves the number of elements in each dimension and $\tK_{cc}$ is a learnable convolution operator that changes the number of channels. Similar to the ResNet block with downsampling, we construct
\begin{equation}\label{eq:resnet_block_full2}
    \tX_{i+1} = U(\tX_i) \rightarrow \tX_{i+1} = \tK_{cc} \tX_i \rightarrow 3\times \text{ResNet block (\Cref{eq:resnet_block})},
\end{equation}
where $U(\cdot)$ is $2\times$ upsampling operator.

\Cref{tab:seismic_data_encoder} provides the full four-level network design for the seismic data encoder. The final layer is the only one that is not convolutional and instead uses an affine map represented by the linear operator $\tA$. Similarly, the seismic data decoder (\Cref{tab:seismic_data_decoder}) starts with a different learnable linear operator.

The encoder and decoder for the velocity model, $E_q$ and $D_q$, respectively, are the same as described above and in \Cref{tab:seismic_data_encoder} and \Cref{tab:seismic_data_decoder}, except that the input is a single channel of size $n_z \times n_x$ pixels. The dimensions of the linear operators $\tA$ change accordingly.

\begin{table}
    \centering
    \begin{tabular}{lcl}
        \textbf{Layer} & \textbf{Feature size} & \textbf{Type}\\
        \hline
         input & $30 \times 72 \times 144$ & channels $\times$ receivers $\times$ time samples\\
         1-3 & $32 \times 72 \times 144$ & \Cref{eq:resnet_block_full} \\
         4-6 & $32 \times 36 \times 72$ & \Cref{eq:resnet_block_full} \\
         7-9 & $64 \times 18 \times 36$ & \Cref{eq:resnet_block_full}\\
         10-12 & $64 \times 9 \times 18$ & \Cref{eq:resnet_block} $\times 3$\\
         13 & $128 \times 9 \times 18$ & $ \tX_{i+1} = \tK_{cc} \tX_i$\\
         14 & $512$ & $\tX_{i+1} = N(\tA\vec{\tX_i})$\\
    \end{tabular}
    \caption{Network for the seismic data encoder $E_b(\bfb)$}
    \label{tab:seismic_data_encoder}
\end{table}

\begin{table}
    \centering
    \begin{tabular}{lcl}
        \textbf{Layer} & \textbf{Feature size} & \textbf{Type}\\
        \hline
         input & $512$ & Latent variables\\
         1 & $128 \times 9 \times 18$ & $\tX_{i+1} = \tA\vec{\tX_i} \rightarrow$ reshape\\
         2 & $64 \times 9 \times 18$ & $ \tX_{i+1} = \tK_{cc} \tX_i$\\
         3-5 & $64 \times 9 \times 18$ & \Cref{eq:resnet_block} $\times 3$\\
         6-8 & $32 \times 18 \times 36$ & \Cref{eq:resnet_block_full2}\\
         9-11 & $16 \times 36 \times 72$ & \Cref{eq:resnet_block_full2} \\
         12-14 & $16 \times 72 \times 144$ & \Cref{eq:resnet_block_full2} \\
         15 & $30 \times 72 \times 144$ & $ \tX_{i+1} = \tK_{cc} \tX_i$\\
     \end{tabular}
    \caption{Network for the seismic data encoder $D_b(\bfb)$}
    \label{tab:seismic_data_decoder}
\end{table}

\end{document}